\begin{document}

\twocolumn[
\icmltitle{Efficient Domain Generalization via Common-Specific Low-Rank Decomposition}




\icmlsetsymbol{equal}{*}

\begin{icmlauthorlist}
\icmlauthor{Vihari Piratla}{iitb}
\icmlauthor{Praneeth Netrapalli}{msri}
\icmlauthor{Sunita Sarawagi}{iitb}
\end{icmlauthorlist}

\icmlaffiliation{iitb}{Department of Computer Science, Indian Institute of Technology, Mumbai, India}
\icmlaffiliation{msri}{Microsoft Research, Bangalore, India}

\icmlcorrespondingauthor{Vihari Piratla}{viharipiratla@gmail.com}

\icmlkeywords{Domain Generalization}

\vskip 0.3in
]



\printAffiliationsAndNotice{}  

%


\newcommand{\lab}{\ensuremath{y}}
\newcommand{\dom}{\ensuremath{d}}
\newcommand{\vgg}{\ensuremath{g}}
\newcommand{\pg}{P_{\vgg}}
\newcommand{\labspace}{\mathcal{Y}}
\newcommand{\domspace}{\mathcal{U}}
\newcommand{\ndom}{$D$}
\newcommand{\dats}{\mathcal{D}_s}
\newcommand{\datt}{\mathcal{D}_t}
\newcommand{\model}{\mathcal{M}}
\newcommand{\func}{f}
\newcommand{\mos}{CSD}
\newcommand{\mosdescr}{Common-specific Decomposition.}
\newcommand{\mosr}{CSD-rand}
\newcommand{\cR}{\mathbf{r}_x}
\newcommand{\R}{\mathbb{R}}
\newcommand{\numC}{C}
\newcommand{\rank}{k}
\newcommand{\Span}[1]{\textrm{Span}\left({#1}\right)}
\newcommand{\Rank}[1]{\textrm{rank}\left({#1}\right)}
\newcommand{\defeq}{:=}
\newcommand{\What}{\widehat{W}}
\newcommand{\vtilde}{\widetilde{v}}
\newcommand{\iprod}[2]{\langle #1, #2 \rangle}
\newcommand{\norm}[1]{\left\|{#1} \right\|}
\newcommand{\ones}{\mathds{1}}
\newcommand{\frob}[1]{\left\|{#1} \right\|_F}
\newcommand{\trans}[1]{{#1}^{\top}}
\newcommand{\praneeth}[1]{{\color{red} #1}}
\newcommand{\vihari}[1]{{\color{blue} Vihari: #1}}
\newcommand{\sunita}[1]{{\color{magenta} Sunita: #1}}
\newtheorem{theorem}{Theorem}
\newtheorem{assumption}{Assumption}
\newtheorem{remark}{Remark}[section]
\newtheorem{corollary}{Corollary}[section]
\newtheorem{lemma}{Lemma}
\newtheorem{proposition}{Proposition}
\theoremstyle{definition}
\newtheorem{defn}{Definition}
\newcommand{\wtilde}{\widetilde{w}}
\newcommand{\Utilde}{\widetilde{U}}
\newcommand{\Stilde}{\widetilde{\Sigma}}
\newcommand{\Vtilde}{\widetilde{V}}
\newcommand{\Wtilde}{\widetilde{W}}


\begin{abstract}
Domain generalization refers to the task of training a model which generalizes to new domains that are not seen during training. We present CSD (Common Specific Decomposition), for this setting, which \emph{jointly} learns a common component (which generalizes to new domains) and a domain specific component (which overfits on training domains). The domain specific components are discarded after training and only the common component is retained. The algorithm is extremely simple and involves only modifying the final linear classification layer of any given neural network architecture.
We present a principled analysis to understand existing approaches, provide identifiability results of CSD, and study effect of low-rank on domain generalization.
We show that CSD either matches or beats state of the art approaches for domain generalization based on domain erasure,  domain perturbed data augmentation, and meta-learning.
Further diagnostics on rotated MNIST, where domains are interpretable, confirm the hypothesis that CSD successfully disentangles common and domain specific components and hence leads to better domain generalization.

\end{abstract}


\section{Introduction}
In the domain generalization (DG) task we are given domain-demarcated data from multiple domains during training, and our goal is to create a model that will generalize to instances from new domains during testing.  Unlike in the more popular domain adaptation task~\cite{MansourMA09,Ben-David:2006:ARD:2976456.2976474,Kumar2010} that explicitly adapts to a fixed target domain, DG requires zero-shot generalization to individual instances from multiple new domains.  Domain generalization is of particular significance in large-scale deep learning networks because large training sets often entail aggregation from multiple distinct domains, on which today's high-capacity networks easily overfit.  Standard methods of regularization only address generalization to unseen instances sampled from the distribution of training domains, and have been shown to perform poorly on instances from unseen domains.

Domain Generalization approaches have a rich history. Earlier methods were simpler and either sought to learn feature representations that were invariant across domains~\cite{motiian2017CCSA,MuandetBS13,GhifaryBZB15,WangZZ2019}, or decomposed parameters into shared and domain-specific components~\cite{ECCV12_Khosla,LiYSH17}.  Of late however the methods proposed for DG are significantly more complicated and expensive.  A recent favorite is gradient-based meta-learning that trains with sampled domain pairs to minimize either loss on one domain while updating parameters on another domain~\cite{BalajiSR2018,liYY2018}, or minimizes the divergence between their representations~\cite{DouCK19}.  Another approach is to augment training data with domain adversarial perturbations~\cite{VihariSSS18}.


\textbf{Contributions}\\
Our paper starts with analyzing the domain generalization problem in a simple and natural generative setting.  We use this setting to provide a principled understanding of existing DG approaches and improve upon prior decomposition methods.  We design an algorithm \mos\ that decomposes only the last softmax parameters into a common component and  a low-rank domain-specific component with a regularizer to promote orthogonality of the two parts.  We prove identifiability of the shared parameters, which was missing in earlier decomposition-based approaches.  We analytically study the effect of rank in trading off domain-specific noise suppression and domain generalization, which in earlier work was largely heuristics-driven.  

We show that our method is almost an order of magnitude faster than state-of-the-art meta-learning based methods~\cite{DouCK19}, and provides higher accuracy than existing approaches, particularly when the number of domains is large.  Our experiments span both image and speech datasets and a large range of training domains (5 to 1000), in contrast to recent DG approaches evaluated only on 
a few domains.  
We provide empirical insights on the working of CSD on the rotated MNIST datasets where the domains are interpretable, and show that CSD indeed manages to separate out the generalizable shared parameters while training with simple domain-specific losses. We present an ablation study to evaluate the importance of the different terms in our training objective that led to improvements with regard to earlier decomposition approaches.

\section{Related Work}
The work on Domain Generalization is broadly characterized by four major themes:

\paragraph{Domain Erasure}
Many early approaches attempted to repair the feature representations so as to reduce divergence between representations of different training domains.   \citet{MuandetBS13} learns a kernel-based domain-invariant representation. \citet{GhifaryBZB15} estimates shared features by jointly learning multiple data-reconstruction tasks.   \citet{Li2018DomainGW} uses MMD to maximize the match in the feature distribution of two different domains. The  idea of domain erasure is further specialized in \cite{WangZZ2019} by trying to project superficial (say textural) features out using image specific kernels. 
Domain erasure is also the founding idea behind many domain adaptation approaches, example ~\cite{Ben-David:2006:ARD:2976456.2976474,HoffmanMN18,Ganin16} to name a few.

\paragraph{Augmentation} The idea behind these approaches is to train the classifier with instances obtained by domains hallucinated from the training domains, and thus make the network `ready' for these neighboring domains. \citet{VihariSSS18} proposes to augment training data with instances perturbed along directions of domain change. A second classifier is trained in parallel to capture directions of domain change.   \citet{VolpiNSDM2018} applies such augmentation on single domain data.  Another type of augmentation is to simultaneously solve for an auxiliary task.  For example, \citet{CarlucciAS2019} achieves domain generalization for images by solving an auxiliary unsupervised jig-saw puzzle on the side.

\paragraph{Meta-Learning/Meta-Training}
A recent popular approach is to pose the problem as a meta-learning task, whereby we update parameters using meta-train loss but simultaneously minimizing meta-test loss~\cite{liYY2018},\cite{BalajiSR2018} or learn discriminative features that will allow for semantic coherence across meta-train and meta-test domains~\cite{DouCK19}.  More recently, this problem is being pursued in the spirit of estimating an invariant optimizer across different domains~ and solved by a form of meta-learning in \cite{ArjovskyLID19}. Meta-learning approaches are complicated to implement, and slow to train.  

\paragraph{Decomposition} In these approaches the parameters of the network are expressed as the sum of a common parameter and  domain-specific parameters during training.  \citet{Daume2007} first applied this idea for domain adaptation.  \citet{ECCV12_Khosla} applied decomposition to DG by retaining only the common parameter for inference. \citet{LiYSH17} extended this work to CNNs where each layer of the network was decomposed into common and specific low-rank components.  
Our work provides a principled understanding of when and why these methods might work and uses this understanding to design an improved algorithm \mos.  Three key differences are:  \mos\ decomposes only the last layer, imposes loss on both the common and domain-specific parameters, and constrains the two parts to be orthogonal.  We show that orthogonality is required for theoretically proving identifiability. 
As a result, this newer avatar of an old decomposition-based approach surpasses recent, more involved augmentation and meta-learning approaches. 


%

\section{Our Approach}
Our approach is guided by the following assumption about domain generalization settings.

\textbf{Assumption}: There are common features in the data whose correlation with label is consistent across domains and domain specific features whose correlation with label varies (from positive to negative) across domains. Classifiers that rely on common features generalize to new unseen domains far better than those that rely on domain specific features.

Note that we make \emph{no} assumptions about a) the domain predictive power of common features and b) the net correlation between domain specific features and the label. Let us consider the following simple example which illustrates these points. There are $D$ training domains and examples $(x,y)$ from domain $i\in [D]$ are generated as follows:
\begin{align}\label{eqn:synth-example}
    x = y (e_c + \beta_i e_s) + \mathcal{N}(0,\Sigma_i) \in \mathbb{R}^m, \; \forall \; i \in [D]
\end{align}
where $y = \pm 1$ with equal probability, $m$ is the dimension of the training examples, $e_c \in \R^m$ is a common feature whose correlation with the label is constant across domains and $e_{s} \perp e_c \in \R^m$ is a domain specific feature whose correlation with the label, given by the coefficients $\beta_{i}$, varies from domain to domain.
In particular, for each domain $i$, suppose $\beta_{i} \sim \textrm{Unif}\left[-1,2\right]$. Note that though $\beta_{i}$ vary from positive to negative across various domains, there is a net positive correlation between $e_s$ and the label $y$. 
$\mathcal{N}(0,\Sigma_i)$ denotes a  standard normal random variable with mean zero and covariance matrix $\Sigma_i$. Since $\Sigma_i$ varies across domains, every feature  
captures some domain information. {We note that the assumption $e_s \perp e_c$ is indeed restrictive -- we use it here only to make the discussion and expressions simpler. We relax this assumption later in this section when we discuss the identifiability of domain generalizing classifier.}
Our assumption at the beginning of this section (which envisages the possibility of seeing $\beta_i \notin [-1,2]$ at test time) implies that the only classifier that generalizes to new domains 
is one that depends solely on $e_c$ \footnote{Note that this last statement relies on the assumption that $e_c \perp e_s$. 
If this is not the case, the correct domain generalizing classifier is the component of $e_c$ that is orthogonal to $e_s$ i.e., $e_c - \frac{\iprod{e_c}{e_s}}{\norm{e_s}^2} \cdot e_s$. See~\eqref{eq:proj}.}. 
Consider training a linear classifier on this dataset. We will describe the issues faced by existing domain generalization methods.
\begin{itemize}
    \item \textbf{Empirical risk minimization (ERM)}: When we empirically train a linear classifier using ERM with cross entropy loss on all of the training data,
    the resulting classifier puts significant nonzero weight on the domain specific component $e_s$. The reason for this is that there is a bias in the training data which gives an overall positive correlation between $e_s$ and the label.
    \item \textbf{Domain erasure}  \cite{Ganin16}: Domain erasure methods seek to extract features that have the same distribution across different domains and construct a classifier using those features. The difference in noise variance in~\eqref{eqn:synth-example} across domains means that all features have domain signal. In fact, linear classifiers on any feature 
    can obtain nontrivial domain classification accuracy. 
    The premise of domain erasure methods, that there exist features which have high prediction power of the label but do not capture domain information, does not apply in this setting and domain erasure methods do not perform well.
    \item \textbf{Domain adversarial perturbations~\cite{VihariSSS18}}: Domain adversarial perturbations seek to augment the training dataset with adversarial examples obtained using domain classification loss, and train a classifier on the resulting augmented dataset. Since the common component $e_c$ has domain signal, the adversarial examples will induce variation in this component and so the resulting classifier puts less weight on the common component.
     \item \textbf{Meta-learning}:
    Meta-learning based DG approaches such as \cite{DouCK19} work with pairs of domains. Parameters updated using gradients on loss of one domain, when applied on samples of both domains in the pair should lead to similar class distributions.  If the method used to detect similarity is robust to domain-specific noise, meta-learning methods could work well in this setting.  But meta-learning methods require second order gradient updates, and/or are generally considered expensive to implement.     
\end{itemize}

\textbf{Decomposition based approaches~\cite{ECCV12_Khosla,LiYSH17}}: Decomposition based approaches rely on the observation that for problems like~\eqref{eqn:synth-example}, there exist good domain specific classifiers $w_i$, one for each domain $i$, such that:
\begin{align}\label{eqn:specific-classifiers}
    \tilde{w}_i = e_c + \gamma_i e_s,
\end{align}
where $\gamma_i$ is a function of $\beta_i$. Note that all these domain specific classifiers share the common component $e_c$ which is the domain generalizing classifier that we are looking for! If we are able to find domain specific classifiers of the form~\eqref{eqn:specific-classifiers}, we can extract $e_c$ from them. This idea can be extended to a generalized version of~\eqref{eqn:synth-example}, where the latent dimension of the domain space is $k$ i.e., say
\begin{align}\label{eqn:synth-example-k}
    x = y(e_c + \sum_{j=1}^k \beta_{i,j} e_{s_j}) + \mathcal{N}(0,\Sigma_i).
\end{align}
$e_{s_j} \perp e_c \in \R^m$, {and $e_{s_j} \perp e_{s_\ell}$ for $j,\ell \in [k], j \neq \ell$} are domain specific features whose correlation with the label, given by the coefficients $\beta_{i,j}$, varies from domain to domain. In this setting, there exist good domain specific classifiers $\tilde{w}_i$ such that:
\begin{align*}
    \tilde{w}_i = e_c + E_s \gamma_i
\end{align*}
where $e_c \in \R^m$ is a domain generalizing classifier, $E_s = \begin{bmatrix} e_{s_1} & e_{s_2} & \cdots & e_{s_k} \end{bmatrix} \in \R^{m \times k}$ consists of domain specific components
and $\gamma_i \in \R^k$ is a domain specific combination of the domain specific components that depends on $\beta_{i,j}$ for $j=1,\cdots,k$. 
With this observation, the algorithm is simple to state: train domain specific classifiers $\tilde{w}_i$ that can be represented as
\begin{align}\label{eqn:classifier-decomp-1}
    \tilde{w}_i = w_c + W_s \gamma_i \in \R^m.
\end{align}
Here the training variables are $w_c \in \R^m, W_s \in \R^{m \times k}$ and $\gamma_i \in \R^k$. After training, discard all the domain specific components $W_s$ and $\gamma_i$ and return the common classifier $w_c$.
Note that~\eqref{eqn:classifier-decomp-1} can equivalently be written as
\begin{align}\label{eqn:classifier-decomp}
    W = w_c \trans{\ones} + W_s \trans{\Gamma},
\end{align}
where $W \defeq \begin{bmatrix} \tilde{w}_1 & \tilde{w}_2 & \cdots & \tilde{w}_D \end{bmatrix}$, $\ones \in \R^D$ is the all ones vector and $\trans{\Gamma} \defeq \begin{bmatrix} \gamma_1 & \gamma_2 & \cdots & \gamma_D \end{bmatrix}$.

This framing of the decomposition approach, in the context of simple and concrete examples as in~\eqref{eqn:synth-example} and~\eqref{eqn:synth-example-k}, lets us understand the three main aspects that are not properly addressed by prior works in this space: 1) identifiability of $w_c$, 2) choice of low rank and 3) extension to non-linear models such as neural networks.

\textbf{Identifiability of the common component $w_c$}: None of the prior decomposition based approaches investigate identifiability of $w_c$. In fact, given a general matrix $W$ which can be written as $w_c \trans{\ones} + W_s \trans{\Gamma}$, there are multiple ways of decomposing $W$ into this form, so $w_c$ cannot be uniquely determined by this decomposition alone. For example, given a decomposition~\eqref{eqn:classifier-decomp}, for any $(k+1) \times (k+1)$ invertible matrix $R$, we can write $W = \begin{bmatrix} w_c & W_s \end{bmatrix} R^{-1} R \trans{\begin{bmatrix} \ones & \Gamma \end{bmatrix}}$. As long as the first row of $R$ is equal to $\begin{bmatrix}1 & 0 & \cdots & 0 \end{bmatrix}$, the structure of the decomposition~\eqref{eqn:classifier-decomp} is preserved while $w_c$ might no longer be the same. Out of all the different $w_c$ that can be obtained this way, which one is the \emph{correct domain generalizing classifier}? In the setting of~\eqref{eqn:synth-example-k}, where $e_c \perp E_s$, we proposed that the correct domain generalizing classifier is $w_c = e_c$. In the setting where $e_c \not\perp E_s$, we propose that
the correct domain generalizing classifier is the projection of $e_c$ onto the space orthogonal to $\Span{E_s}$ i.e.,
\begin{align}
\label{eq:proj}
    w_c = e_c - P_{E_s} e_c,
\end{align}
where $P_{E_s}$ is the projection matrix onto the span of the domain specific vectors $e_s$. The following lemma characterizes this condition in terms of the decomposition~\eqref{eqn:classifier-decomp}.
\begin{lemma}\label{lem:char}
Suppose $W \defeq e_c \trans{\ones} + E_s \trans{\hat{\Gamma}} = w_c \trans{\ones} + W_s \trans{\Gamma}$ is a rank-$(k+1)$ matrix, where $E_s \in \R^{m \times k}, \hat{\Gamma} \in \R^{D \times k}, W_s \in \R^{m \times k}$ and $\Gamma \in \R^{D \times k}$ are all rank-$k$ matrices with $k < m, D$. Then, $w_c = e_c - P_{E_s} e_c$ if and only if $w_c \perp \Span{W_s}$.
\end{lemma}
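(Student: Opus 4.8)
The plan is to first establish a structural fact that makes both implications short: although the triple $(w_c, W_s, \Gamma)$ realizing $W = w_c\trans{\ones} + W_s\trans{\Gamma}$ is far from unique, the \emph{subspace} $\Span{W_s}$ is an invariant of $W$ — it equals $\Span{E_s}$, and both equal the image of the hyperplane $\ones^{\perp}\subseteq\R^D$ under $W$, i.e. $\{Wu : \trans{\ones}u = 0\}$. Writing $\mathcal{S}$ for this common $k$-dimensional subspace and $P$ for the orthogonal projection onto it, we then have $P_{E_s} = P_{W_s} = P$, and the lemma becomes a one-line manipulation.

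To prove the structural fact I would use the rank-$(k+1)$ hypothesis twice. First, $\Span{E_s}$ is exactly $k$-dimensional (as $E_s$ is rank $k$), so if $e_c\in\Span{E_s}$ every column of $W = e_c\trans{\ones} + E_s\trans{\hat{\Gamma}}$ would lie in $\Span{E_s}$, forcing $\Rank{W}\le k$, a contradiction; hence $e_c\notin\Span{E_s}$. Dually, applying the same count to $\trans{W} = \ones\trans{w_c} + \Gamma\trans{W_s}$ shows the row space of $W$ equals $\Span{\ones}\oplus\Span{\Gamma}$, so $\ones$ lies in the row space of $W$ and therefore $\ker W\subseteq\ones^{\perp}$. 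Next, for any $u$ with $\trans{\ones}u = 0$ we have $Wu = E_s(\trans{\hat{\Gamma}}u) = W_s(\trans{\Gamma}u)$, so $W(\ones^{\perp})\subseteq\Span{E_s}\cap\Span{W_s}$; and rank--nullity for $W$ restricted to $\ones^{\perp}$, using $\ker W\subseteq\ones^{\perp}$ and $\dim\ker W = D-(k+1)$, gives $\dim W(\ones^{\perp}) = (D-1)-(D-k-1) = k$. Since $\Span{E_s}$ and $\Span{W_s}$ are both $k$-dimensional and both contain $W(\ones^{\perp})$, all three subspaces coincide.

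Given this, the forward direction is immediate: if $w_c = e_c - P_{E_s}e_c = e_c - Pe_c$, then $Pw_c = Pe_c - P^2 e_c = 0$ since $P$ is idempotent, i.e. $w_c\perp\mathcal{S} = \Span{W_s}$. For the converse, I would subtract the two expressions for $W$ to get $(e_c - w_c)\trans{\ones} = W_s\trans{\Gamma} - E_s\trans{\hat{\Gamma}}$; every column of the right-hand side lies in $\Span{W_s}+\Span{E_s}=\mathcal{S}$, and every column of the left-hand side equals $e_c - w_c$, so $e_c - w_c\in\mathcal{S}$, i.e. $P(e_c - w_c) = e_c - w_c$. Assuming $w_c\perp\Span{W_s}=\mathcal{S}$ we also have $Pw_c = 0$, and combining these gives $Pe_c = e_c - w_c$, that is $w_c = e_c - P_{E_s}e_c$.

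The two implications are routine once the structural fact is in hand, so the real content — and the only place all the hypotheses ($W$ rank $k+1$; $E_s,\hat{\Gamma},W_s,\Gamma$ rank $k$; $k<m,D$) get consumed — is the claim $\Span{E_s} = \Span{W_s} = W(\ones^{\perp})$. I expect the delicate point there to be verifying $\ker W\subseteq\ones^{\perp}$, equivalently that $\ones$ is not ``absorbed'' into $\Span{\Gamma}$: this is exactly what fails when $\Rank{W}<k+1$, in which case $W(\ones^{\perp})$ can be a proper subspace of $\Span{E_s}$ and different valid decompositions may carve out genuinely different domain-specific subspaces, so that $w_c$ would not be pinned down even up to the orthogonality condition.
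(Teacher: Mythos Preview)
Your proof is correct, and it takes a genuinely different route from the paper's. The paper treats the two implications asymmetrically: for the direction $w_c \perp \Span{W_s} \Rightarrow w_c = e_c - P_{E_s} e_c$ it computes $\trans{W} w_c$ two ways and uses $\ones \notin \Span{\hat{\Gamma}}$ (a consequence of $\Rank{W}=k+1$) to separate the $\ones$-part from the $\hat{\Gamma}$-part, obtaining $\trans{E_s} w_c = 0$ and $\iprod{e_c}{w_c} = \norm{w_c}^2$; for the reverse direction it observes that $W - w_c\trans{\ones}$ has column span $\Span{E_s}$, so $\Span{W_s} = \Span{E_s}$ under that hypothesis. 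You instead isolate the structural fact $\Span{E_s} = \Span{W_s} = W(\ones^{\perp})$ \emph{unconditionally} and prove it first, after which both implications collapse to one-line projection identities. What your approach buys is conceptual clarity: it makes explicit that the domain-specific subspace is an invariant of $W$ independent of the decomposition, and it localizes exactly where each hypothesis (rank $k+1$, rank-$k$ factors, $k<m,D$) is consumed. What the paper's approach buys is brevity in the ``if'' direction---it avoids the rank--nullity bookkeeping on $\ones^{\perp}$---though at the cost of leaving implicit the step from $\trans{E_s} w_c = 0$ and $\iprod{e_c}{w_c} = \norm{w_c}^2$ to $w_c = e_c - P_{E_s} e_c$, which tacitly uses that $w_c$ lies in the column space $\Span{e_c, E_s}$ of $W$.
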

\begin{proof}
\textbf{If direction}: 
Suppose $w_c \perp \Span{W_s}$. Then, $\trans{W} w_c = \iprod{e_c}{w_c} \cdot \ones + \hat{\Gamma} \cdot \left( \trans{E_s} w_c\right) = \norm{w_c}^2 \cdot \ones$. Since $W$ is a rank-$(k+1)$ matrix, we know that $\ones \notin \Span{\hat{\Gamma}}$ and so it has to be the case that $\iprod{e_c}{w_c} = \norm{w_c}^2$ and $\trans{E_s} w_c=0$. Both of these together imply that $w_c$ is the projection of $e_c$ onto the space orthogonal to $E_s$ i.e., $w_c = e_c - P_{E_s} e_c$.

\textbf{Only if direction}: Let $w_c = e_c - P_{E_s} e_c$. Then $e_c \trans{\ones} - w_c \trans{\ones} + E_s \trans{\hat{\Gamma}} = P_{E_s} e_c \trans{\ones} + E_s \trans{\hat{\Gamma}}$ is a rank-$k$ matrix and can be written as $W_s \trans{\Gamma}$ with $\Span{W_s} = \Span{E_s}$. Since $w_c \perp \Span{E_s}$, we also have $w_c \perp \Span{W_s}$.
\end{proof}
So we train for classifiers~\eqref{eqn:classifier-decomp} satisfying $w_c \perp \Span{W_s}$.

\textbf{Why low rank?}: An important choice in the decomposition approaches is the \emph{low} rank of the decomposition~\eqref{eqn:classifier-decomp}, which in prior works was justified heuristically, by appealing to number of parameters.
We prove the following result, which gives us a more principled reason for the choice of low rank parameter $k$. \begin{theorem}\label{thm:main}
Given any matrix $W \in \R^{m \times D}$, the minimizers of the function $f(w_c, W_s, \Gamma) = \frob{W - w_c \trans{\ones} - W_s \trans{\Gamma}}^2$, where $W_s \in \R^{m \times k}$ and $w_c \perp \textrm{Span}\left(W_s\right)$ can be computed by the following steps:
\begin{itemize}
    \item $w_c \leftarrow \frac{1}{D} W \cdot \ones$.
    \item $W_s, \Gamma \leftarrow \textrm{Top-} k \textrm{ SVD}\left(W - w_c \trans{\ones}\right)$.
    \item $w_c^{\textrm{new}} \leftarrow \frac{1}{\norm{\left( w_c \trans{\ones} + W_s \trans{\Gamma} \right)^+ \ones}^2} \left( w_c \trans{\ones} + W_s \trans{\Gamma} \right)^+ \ones$.
    \item $W_s^{\textrm{new}} \trans{\Gamma^{\textrm{new}}} \leftarrow w_c \trans{\ones} + W_s \trans{\Gamma} - w_c^{\textrm{new}} \trans{\ones}$.
    \item Output $w_c^{\textrm{new}}, W_s^{\textrm{new}}, {\Gamma^{\textrm{new}}}$
\end{itemize}
\end{theorem}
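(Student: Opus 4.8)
The plan is to exploit the fact that the objective $f(w_c, W_s, \Gamma)$ depends on its arguments only through the single matrix $M \defeq w_c\trans{\ones} + W_s\trans{\Gamma}$, so that minimizing $f$ subject to $W_s \in \R^{m\times\rank}$ and $w_c \perp \Span{W_s}$ is the same as minimizing $\frob{W - M}^2$ over the set $\mathcal{F}$ of matrices admitting such a representation. I would then proceed in two stages. First, show that steps~1--2 produce a matrix $M_0$ that minimizes $\frob{W - M}^2$ over the strictly larger set $\mathcal{F}' \defeq \{\, c\trans{\ones} + N : c\in\R^m,\ \Rank{N}\le\rank \,\}$, which contains $\mathcal{F}$ since every feasible $M$ has exactly this form. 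Second, show that this $M_0$ in fact already lies in $\mathcal{F}$, the required decomposition being exactly what steps~3--4 write down. Because $\mathcal{F}\subseteq\mathcal{F}'$, these two facts together force $M_0$ to be a minimizer over $\mathcal{F}$, so the output $(w_c^{\textrm{new}}, W_s^{\textrm{new}}, \Gamma^{\textrm{new}})$ minimizes $f$.

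For the first stage I would run the standard centered low-rank approximation argument. For a fixed $N$ the optimal $c$ is the column mean of $W-N$; substituting it reduces the problem to $\min \frob{W_c - N'}^2$ over rank-$\le\rank$ matrices $N'$ with $N'\ones = 0$, where $W_c \defeq W\bigl(I - \tfrac1D\ones\trans{\ones}\bigr)$ is the column-centered data. The one real point is that $W_c\ones = 0$, so the right singular vectors of $W_c$ with nonzero singular values are orthogonal to $\ones$; hence the Eckart--Young optimal rank-$\rank$ truncation of $W_c$ automatically satisfies $N'\ones = 0$, the constraint is inactive, and the optimum is the top-$\rank$ SVD of $W_c$ together with $c = \tfrac1D W\ones$. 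This is precisely steps~1--2, and it also records the identity $(W_s\trans{\Gamma})\ones = 0$, which I will use in the next stage.

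For the second stage I must show that $M_0 = w_c\trans{\ones} + W_s\trans{\Gamma}$ (with $w_c = \tfrac1D W\ones$) can be rewritten as $w_c^{\textrm{new}}\trans{\ones} + W_s^{\textrm{new}}\trans{(\Gamma^{\textrm{new}})}$ with $\Rank{W_s^{\textrm{new}}\trans{(\Gamma^{\textrm{new}})}}\le\rank$ and $w_c^{\textrm{new}} \perp \Span{W_s^{\textrm{new}}}$ (choosing the factorization so that $\Span{W_s^{\textrm{new}}}$ equals the column space of the product). The first identity is immediate from step~4, so the content is the rank bound and the orthogonality. The key computation is that the $w_c^{\textrm{new}}$ of step~3 is, up to a positive scalar, the minimum-norm solution $v$ of $\trans{M_0}\,v = \ones$; because $\ones$ lies in the row space of $M_0$ whenever $\Rank{M_0} = \rank+1$ --- this is exactly where $(W_s\trans{\Gamma})\ones = 0$ enters, the degenerate subcase $\Rank{M_0}\le\rank$ being disposed of directly by taking $w_c^{\textrm{new}} = 0$ --- such a $v$ exists, lies in the column space of $M_0$, is nonzero, and the normalization in step~3 is calibrated so that $\trans{M_0}\,w_c^{\textrm{new}} = \norm{w_c^{\textrm{new}}}^2\ones$. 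Reading this identity off one column at a time gives $\iprod{(M_0)_j - w_c^{\textrm{new}}}{w_c^{\textrm{new}}} = 0$ for every $j$, so each column of $M_0 - w_c^{\textrm{new}}\trans{\ones}$ is orthogonal to $w_c^{\textrm{new}}$; this yields the orthogonality $w_c^{\textrm{new}} \perp \Span{W_s^{\textrm{new}}}$, and since these columns all lie in the column space of $M_0$ yet are orthogonal to the nonzero member $w_c^{\textrm{new}}$ of that space, they span a subspace of dimension at most $\dim(\textrm{colsp}(M_0)) - 1 \le \rank$, which is the rank bound.

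I expect the crux to be exactly this passage from $\mathcal{F}'$ back to $\mathcal{F}$: certifying that the pseudo-inverse formula in step~3 yields a $w_c^{\textrm{new}}$ for which subtracting the rank-one term $w_c^{\textrm{new}}\trans{\ones}$ lowers the rank by exactly one while simultaneously orthogonalizing the residual against $w_c^{\textrm{new}}$. Everything there rests on the identity $(W_s\trans{\Gamma})\ones = 0$ (equivalently, $\ones$ remaining in the row space of $M_0$), so I would isolate that as a small lemma, and I would also flag up front the degenerate configurations --- e.g.\ $\tfrac1D W\ones \in \Span{W_s}$, or $\ones$ not in the row space of $M_0$ --- in all of which $\Rank{M_0}\le\rank$ already and the trivial decomposition with $w_c^{\textrm{new}} = 0$ does the job.
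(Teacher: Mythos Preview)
Your proposal is correct and follows essentially the approach the paper indicates (``similar to the classical low rank approximation theorem of Eckart--Young--Mirsky''). The two-stage structure --- relax the feasible set to $\mathcal{F}' = \{c\trans{\ones} + N : \Rank{N}\le k\}$, solve via centering plus Eckart--Young, then certify that the resulting $M_0$ admits the orthogonal decomposition required for membership in $\mathcal{F}$ --- is the natural route, and your Stage~2 argument (that $\trans{M_0} w_c^{\textrm{new}} = \norm{w_c^{\textrm{new}}}^2\ones$ forces both orthogonality and the rank drop) mirrors exactly the computation the paper carries out in the ``if'' direction of Lemma~\ref{lem:char}. Your explicit isolation of the key fact $(W_s\trans{\Gamma})\ones = 0$ and your flagging of the degenerate configurations (where $\Rank{M_0}\le k$ and one takes $w_c^{\textrm{new}} = 0$) are appropriate; these edge cases are indeed where the pseudoinverse formula in step~3 can break down, and they are tacit in the theorem as stated.
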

The proof of this theorem is similar to that of the classical low rank approximation theorem of Eckart-Young-Mirsky, and is presented in the supplementary material.
As special cases of the above result, we see that for $k=0$, we just obtain the average classifier over all domains $w_c = \frac{1}{D} W \cdot \ones$, while for $k=D-1$, we obtain $w_c = W^+ \ones/\norm{W^+ \ones}^2$. When $W = w_c \trans{\ones} + W_s \Gamma + N$, where $N$ is a noise matrix (for example due to finite samples), both extremes $k=0$ and $k=D-1$ have different advantages/drawbacks:
\begin{itemize}
    \item \textbf{$k=0$}: The averaging effectively reduces the noise component $N$ but ends up retaining some domain specific components if there is net correlation with the label in the training data.
    \item \textbf{$k=D-1$}: The pseudoinverse effectively removes domain specific components and retains only the common component (by Theorem~\ref{thm:main}).
    However, the pseudoinverse does not reduce noise to the same extent as a plain averaging would (since empirical mean is often asymptotically the best estimator for mean).
\end{itemize}
In general, the sweet spot for $k$ lies between $0$ and $D-1$ and its precise value depends on the dimension and magnitude of the domain specific components as well as the magnitude of noise.
In our implementation, we perform cross validation to choose a good value for $k$ but also note that the performance of our algorithm is relatively stable with respect to this choice in Section~\ref{sec:lowrank}.

\textbf{Extension to neural networks}:
Finally, prior works extend this approach to non-linear models such as neural networks by imposing decomposition of the form~\eqref{eqn:classifier-decomp} for parameters in all layers separately. This increases the size of the model significantly and leads to worse generalization performance. Further, it is not clear whether any of the insights we gained above for linear models continue to hold when we include non-linearities and stack them together. So, we propose the following two simple modifications instead:
\begin{itemize}
    \item enforcing the structure~\eqref{eqn:classifier-decomp} only in the final linear layer, as opposed to the standard single softmax layer, and
    \item including a loss term for predictions of common component, in addition to the domain specific losses,
\end{itemize}
both of which encourage learning of \emph{features} with common-specific structure.

Our experiments (Section~\ref{sec:ablation}) show that these modifications (orthogonality, changing only the final linear layer and including common loss) are instrumental in making decomposition methods state of the art for domain generalization. Our overall training algorithm below details the steps.

\subsection{Algorithm CSD}
Our method of training neural networks for domain generalization appears as Algorithm~\ref{alg:mos} and is called CSD for \mosdescr  The analysis above was for the binary setting, but we present the algorithm for the multi-class case with  $\numC=$ \# classes.
The only extra parameters that CSD requires, beyond normal feature parameters $\theta$ and softmax parameters $w_c \in \R^{C \times m}$, are the domain-specific low-rank parameters $W_s \in \R^{C \times m \times k}$  and $\gamma_i \in \R^k$, for $i \in [D]$. Here $m$ is the representation size in the penultimate layer. Thus, $\trans{\Gamma}=[\gamma_1,\ldots,\gamma_D]$ can be viewed as a domain-specific embedding matrix of size $k \times D$.  Note that unlike a standard mixture of softmax, the $\gamma_i$ values are not required to be on the simplex.  
Each training instance consists of an input $x$, true label $y$, and a domain identifier $i$ from 1 to $D$.  Its domain-specific softmax parameter is computed by $w_i = w_c + W_s \gamma_i$.

Instead of first computing the full-rank parameters and then performing SVD, we directly compute the low-rank decomposition along with training the network parameters $\theta$. 
For this we add a weighted combination of these three terms in our training objective:

    (1) Orthonormality regularizers to make $w_c[y]$ orthogonal to domain-specific $W_{s}[y]$ softmax parameters for each label $y$ and to avoid degeneracy by  controlling the norm of each softmax parameter to be close to 1.
    
    (2) A cross-entropy loss between $y$ and distribution computed from the $w_i$ parameters to train both the common and low-rank domain-specific parameters.
    
    (3) A cross-entropy loss between $y$ and distribution computed from the $w_c$ parameters. This loss might appear like normal ERM loss but when coupled with the orthogonality regularizer above it achieves domain generalization.

\begin{algorithm}
\caption{Common-Specific Low-Rank Decomposition (\mos\ ) }
\label{alg:mos}
\begin{algorithmic}[1]
\State {\bf Given:} $D,m,\rank,\numC,\lambda,\kappa$,train-data
\State {Initialize params $w_c \in \R^{\numC\times m}, W_s \in \R^{\numC\times m \times k}$}
\State {Initialize  $\gamma_i \in \R^\rank: i \in [D]$}
\State {Initialize params $\theta$ of feature network $G_\theta:{\mathcal X} \mapsto \R^m$}
\State{$\hat{W} = [w_c^T, W_s^T]^T $}
\State{$\mathcal{R} \gets \sum_{y=1}^\numC \|I_{k+1}-\hat{W}[y]^T\hat{W}[y]\|_F^2$} 
     \Comment{Orthonormality constraint}
\For{$(x,y,i) \in $ train-data}
   
   \State{$w_i \gets w_c + W_s \gamma_i$}
    \State{loss $ \mathrel{+}=  \mathcal{L}(G_\theta(x), y; w_i) + \lambda  \mathcal{L}(G_\theta(x), y; w_c)$}
\EndFor

\State{Optimize loss$+\kappa \mathcal{R}$ wrt $\theta,w_c,W_s,\gamma_i$}
\State{\bf Return $\theta,w_c$ \Comment{for inference}}
  \end{algorithmic}
\end{algorithm}

\subsection{Synthetic setting: comparing CSD with ERM}
We use the data model proposed in Equation~\ref{eqn:synth-example} to simulate multi-domain training data with $D=10$ domains and $m=2$ features. For each domain, we sample $\beta_i$ uniformly from -1, 2 and $\sigma_{ij}$ uniformly from 0, 1. We set $e_c=[1, 0]$ and $e_s=[0,1]$. We sample 100 data points for each domain using its corresponding values: $\beta_i, \Sigma_i$. We then fit the parameters of a linear classifier with log loss using either standard expected risk minimization (ERM) estimator or \mos\ . 

The scaled solution obtained using ERM is [1, 0.2] and [1, 0.03] using \mos\ with high probability from ten runs. As expected, the solution of ERM has a positive but small coefficient on the second component due to the net positive correlation on this component. \mos\ on the other hand correctly decomposed the common component.

\section{Experiments}
We compare our method with three existing domain generalization methods: (1) {\bf MASF}~\cite{DouCK19} is a recently proposed meta-learning based strategy to learn domain-invariant features.  (2) {\bf CG}: As a representative of methods that augment data for domain generalization we compare with \cite{VihariSSS18}, and (3) {\bf LRD}: the low-rank decomposition approach of \cite{LiYSH17} but only at the last softmax layer.
Our baseline is standard expected risk minimization (ERM) using cross-entropy loss that ignores domain boundaries altogether. 

We evaluate on five different datasets spanning image and speech data types and varying number of training domains. We assess quality of domain generalization as accuracy on a set of test domains that are disjoint from the set of training domains.

\textbf{Experiment setup details}
We use ResNet-18 to evaluate on rotated image tasks, LeNet for Handwritten Character datasets, and a multi-layer convolution network similar to what was used for Speech tasks in~\cite{VihariSSS18}. 
We added a layer normalization just before the final layer in all these networks since it helped generalization error on all methods, including the baseline. \mos\ is relatively stable to hyper-parameter choice, we set the default rank to 1, and parameters of weighted loss to $\lambda=1$ and $\kappa=1$. These hyper-parameters along with learning rates of all other methods as well as number of meta-train/meta-test domains for MASF and step size of perturbation in CG are all picked using a task-specific development set. Further, we scale $\Gamma$ using sigmoid activation. 

\textbf{Handwritten character datasets:}
In these datasets we have characters  written by many different people, where the person writing serves as domain and generalizing to new writers is a natural requirement. Handwriting datasets are challenging since it is difficult to disentangle a person's writing style from the character (label), and methods that attempt to erase domains are unlikely to succeed.
We have two such datasets.

(1) The LipitK dataset\footnote{\url{http://lipitk.sourceforge.net/datasets/dvngchardata.htm}} earlier used in \cite{VihariSSS18} is a Devanagari Character dataset which has classification over 111 characters (label) collected from 106 people (domain).   We train three different models on each of 25, 50, and 76 domains, and test on a disjoint set of 20 domains while using 10 domains for validation.

(2) Nepali Hand Written Character Dataset (NepaliC)\footnote{\url{https://www.kaggle.com/ashokpant/devanagari-character-dataset}} contains data collected from 41 different people on consonants as the character set which has 36 classes. Since the number of available domains is small, in this case we create a fixed split of 27 domains for training, 5 for validation and remaining 9 for testing. 

We use LeNet as the base classifier on both the datasets. 

\begin{table*}[htb]
    \centering
    \begin{tabular}{|l|lll|l|}
    \toprule
     & \multicolumn{3}{|c|}{LipitK} & \multicolumn{1}{|c|}{NepaliC} \\
    \hline
Method & 25 & 50 & 76 & 27 \\
\hline
ERM~(Baseline) & 74.5 (0.4) & 83.2 (0.8) & 85.5 (0.7) & 83.4 (0.4) \\
LRD~\cite{LiYSH17} & 76.2 (0.7) & 83.2 (0.4) & 84.4 (0.2) & 82.5 (0.5) \\
CG~\cite{VihariSSS18} & 75.3 (0.5) & 83.8 (0.3) & 85.5 (0.3) & 82.6 (0.5) \\
MASF~\cite{DouCK19} & {\bf 78.5} (0.5) & 84.3 (0.3) & 85.9 (0.3) & 83.3 (1.6) \\
\mos~(Ours) & 77.6 (0.4) & {\bf 85.1} (0.6) & {\bf 87.3} (0.4) & {\bf 84.1} (0.5) \\

    \hline
    
    \end{tabular} 
    \caption{Comparison of our method on two handwritting datasets: LipitK and NepaliC.  For LipitK  since number of available training domains is large we also report results with increasing number of domains. The numbers are average (and standard deviation) from three runs. 
    }
    \label{tab:image}
\end{table*}

In Table~\ref{tab:image} we show the accuracy using different methods for different number of training domains on the LipitK dataset, and on the Nepali dataset.  We observe that across all four models \mos\ provides significant gains in accuracy over the baseline (ERM), and all three existing methods LRD, CG and MASF.  The gap between prior decomposition-based approach (LRD) and ours, establishes the importance of our orthogonality regularizer and common loss term.   MASF is better than \mos\ only for 25 domains and as the number of domains increases to 76, \mos's accuracy is 87.3 whereas MASF's is 85.9.  


In terms of training time MASF is 5--10 times slower than \mos, and CG is 3--4 times slower than \mos. In contrast \mos\ is just 1.1 times slower than ERM.  Thus, the increased generalization of \mos\ incurs little additional overheads in terms of training time compared to existing methods. 

\begin{table}[htb]
    \centering
    \begin{tabular}{|l|r|r|r|r|}
    \hline
    Method & 50 & 100 & 200 & 1000 \\
    \hline
    ERM & 72.6 (.1) & 80.0 (.1) & 86.8 (.3) & 90.8 (.2) \\
    CG & 73.3 (.1) & 80.4 (.0) & 86.9 (.4) & 91.2 (.2) \\
    \mos & {\bf 73.7} (.1) & {\bf 81.4} (.4) & {\bf 87.5} (.1) & {\bf 91.3} (.2) \\
    \hline
    \end{tabular}
    \caption{Accuracy comparison on speech utterance data with varying number of training domains. The numbers are average (and standard deviation) from three runs.}
    \label{tab:speech}
\end{table}

\textbf{Speech utterances dataset}
We use the utterance data released by Google which was also used in \cite{VihariSSS18} and is collected from a large number of subjects\footnote{\url{https://ai.googleblog.com/2017/08/launching-speech-commands-dataset.html}}. The base classifier and the preprocessing pipeline for the utterances are borrowed from the implementation provided in the Tensorflow examples\footnote{\url{https://github.com/tensorflow/tensorflow/tree/r1.15/tensorflow/examples/speech_commands}}. We used the default ten (of the 30 total) classes for classification similar to \cite{VihariSSS18}. We use ten percent of total number of domains for each of validation and test. 

The accuracy comparison for each of the methods on varying number of training domains is shown in Table~\ref{tab:speech}. We could not compare with MASF since their implementation is only made available for image tasks.  Also, we skip comparison with LRD since earlier experiments established that it can be worse than even the baseline.  
Table~\ref{tab:speech} shows that CSD is better than  both the baseline and CG on all domain settings.  When the number of domains is very large (for example, 1000 in the table), even standard training can suffice since the training domains could 'cover' the variations in the test domains.

\begin{table*}[htb]
\begin{minipage}{0.55\linewidth}
    \begin{tabular}{|l|l|l|l|l|}
    \hline
     & \multicolumn{2}{|c|}{MNIST} & \multicolumn{2}{|c|}{Fashion-MNIST} \\
     & in-domain & out-domain & in-domain & out-domain \\
    \hline
    ERM & 98.3 (0.0) & 93.6 (0.7) & 89.5 (0.1) & 76.5 (0.7)\\
    MASF & 98.2 (0.1) & 93.2 (0.2) &  86.9 (0.3) & 72.4 (2.9)\\
    \mos & \textbf{98.4} (0.0) & \textbf{94.7} (0.2) & \textbf{89.7} (0.2) & \textbf{78.9} (0.7)\\
    \hline
    \end{tabular}
    \caption{Performance comparison on rotated MNIST and rotated Fashion-MNIST, shown are the in-domain and out-domain accuracies averaged over three runs along with standard deviation in the brackets.}
    \label{tab:toy:results}
\end{minipage}\hfill
\begin{minipage}{0.4\linewidth}
\centering
    \begin{tabular}{|l|r|r|}
    \hline
        & \multicolumn{2}{|c|}{MNIST} \\
         & in-domain & out-domain \\\hline
         ERM & 97.7 (0.) & 89.0 (.8) \\ 
         MASF & 97.8 (0.) & 89.5 (.6) \\
         \mos\ & \textbf{97.8} (0.) & \textbf{90.8} (.3) \\\hline
    \end{tabular}
    \caption{In-domain and out-domain accuracies on rotated MNIST without batch augmentations. Shown are average and standard deviation from three runs.}
    \label{tab:expt:rmnist:woaug}
\end{minipage}
\end{table*}

\paragraph{Rotated MNIST and Fashion-MNIST:}
\label{sec:expt:rotation}
Rotated MNIST is a popular benchmark for evaluating domain generalization where the angle by which images are rotated is the proxy for domain.
We randomly select\footnote{ The earlier work on this dataset however lacks standardization of splits, train sizes, and baseline network across the various papers~\cite{VihariSSS18}~\cite{WangZZ2019}. Hence we rerun experiments using different methods on our split and baseline network.
} 
a subset of 2000 images for MNIST and 10,000 images for Fashion MNIST, the original set of images is considered to have rotated by 0{\degree} and is denoted as $\mathcal{M}_0$. Each of the images in the data split when  rotated by $\theta$ degrees is denoted $\mathcal{M}_\theta$. The training data is union of all images rotated by 15{\degree} through 75{\degree} in intervals of 15{\degree}, creating a total of 5 domains.  We evaluate on $\mathcal{M}_0, \mathcal{M}_{90}$.  In that sense only in this artificially created domains, are we truly sure of the test domains being outside the span of train domains.
Further, we employ batch augmentations such as flip left-right and random crop since they significantly improve generalization error and are commonly used in practice. We train using the ResNet-18 architecture.  

Table~\ref{tab:toy:results} compares the baseline, MASF, and \mos\ on MNIST and Fashion-MNIST. We show accuracy on test set from the same domains as training (in-domain) and test set from 0{\degree} and 90{\degree} that are outside the training domains. 
Note how the \mos's improvement on in-domain accuracy is insignificant, while gaining substantially on out of domain data.  This shows that CSD specifically targets domain generalization.  Surprisingly MASF does not perform well at all, and is significantly worse than even the baseline.  One possibility could be that the domain-invariance loss introduced by MASF conflicts with the standard data augmentations used on this dataset. To test this, we compared all the methods without such augmentation.  We observe that although all numbers have dropped 1--4\%, now MASF is showing sane improvements over baseline, but \mos\ is better than MASF even in this setting.  




\subsection{How does \mos\ work?}
\begin{figure*}[htb]
  \subfigure[Beta fit on estimated probabilities of correct class using common common component.]{
    \includegraphics[width=0.42\textwidth]{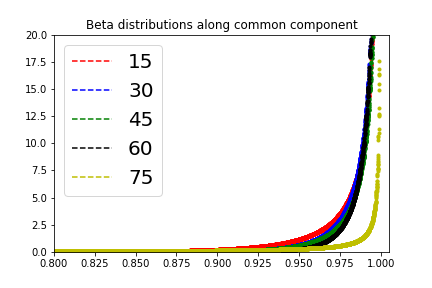}
    \label{fig:rmnist:1}
  }
  \qquad
  \subfigure[Beta fit on estimated probabilities of correct class using specialized component.]{
    \includegraphics[width=0.42\textwidth]{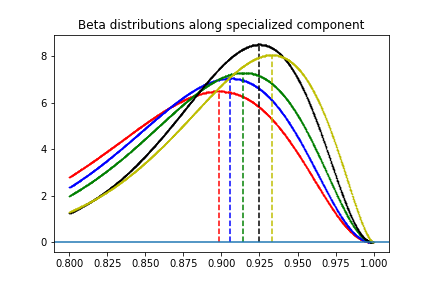}
    \label{fig:rmnist:2}
  }
  \caption{Distribution of probability assigned to the correct class using common or specialized components alone.}
  \label{fig:rmnist}
\end{figure*}

We provide empirical evidence in this section that \mos\ effectively decomposes common and low-rank specialized components. Consider the rotated MNIST task trained on ResNet-18 as discussed in Section~\ref{sec:expt:rotation}. Since each domain differs only in the amount of rotation, we expect $W_s$ to be of rank 1 and so we chose $k=1$ giving us one common and one specialized component. We are interested in finding out if the common component is agnostic to the domains and see how the specialized component varies across domains. 

We look at the probability assigned to the correct class for all the train instances using only common component $w_c$ and using only specialized component $W_s$. For probabilities assigned to examples in each domain using each component, we fit a Beta distribution. Shown in Figure~\ref{fig:rmnist:1} is fitted beta distribution on probability assigned using $w_c$ and Figure~\ref{fig:rmnist:2} for $w_s$. Note how in Figure~\ref{fig:rmnist:1}, the colors are distinguishable, yet are largely overlapping. However in Figure~\ref{fig:rmnist:2}, notice how modes corresponding to each domain are widely spaced, moreover the order of modes and spacing between them cleanly reflects the underlying degree of rotation from 15\degree\ to 75$\degree$. 


These observations support our claims on utility of \mos\ for low-rank decomposition.

\subsection{Ablation study}\label{sec:ablation}

In this section we study the importance of each of the three terms in CSD's final loss: common loss computed from $w_c$ ($\mathcal{L}_c$), specialized loss  ($\mathcal{L}_s$) computed from $w_i$ that sums common ($w_c$) and domain-specific parameters ($W_s, \Gamma$), orthonormal loss ($\mathcal{R}$) that makes $w_c$ orthogonal to domain specific softmax (Refer: Algorithm~1). In Table~\ref{tab:ablation}, we demonstrate the contribution of each term to \mos\ loss by comparing accuracy on LipitK with 76 domains.

\begin{table}[htb]
    \centering
    \begin{tabular}{|c|c|c|r|}
    \hline
    Common  & Specialized  & Orthonormality & Accuracy \\
    loss $\mathcal{L}_c$ & loss $\mathcal{L}_s$ & regularizer $\mathcal{R}$ &  \\
    \hline
    Y & N & N & 85.5 (.7) \\
    N & Y & N & 84.4 (.2) \\
    N & Y & Y & 85.3 (.1) \\
    Y & N & Y & 85.7 (.4) \\
    Y & Y & N & 85.8 (.6) \\
    Y & Y & Y & 87.3 (.3) \\
    \hline
    \end{tabular}
    \caption{Ablation analysis on \mos\ loss using LipitK (76)}
    \label{tab:ablation}
\end{table}

The first row is the baseline with only the common loss.  The second row shows prior
decomposition methods that imposed only the specialized loss without any orthogonality or a separate common loss. This is worse than even the baseline.  
This can be attributed to decomposition without identifiability guarantees thereby losing part of $w_c$ when the specialized $W_s$ is discarded. Using orthogonal constraint, third row, fixes this ill-posed decomposition however, understandably, just fixing the last layer does not gain big over baseline. Using both common and specialized loss even without orthogonal constraint showed some merit, perhaps because feature sharing from common loss covered up for bad decomposition. Finally, fixing this bad decomposition with orthogonality constraint and using both common and specialized loss constitutes our \mos\ algorithm and is significantly better than any other variant.

This empirical study goes on to show that both $\mathcal{L}_c$ and $\mathcal{R}$ are important. Imposing $\mathcal{L}_c$ with $w_c$ does not help feature sharing if it is not devoid of specialized components from bad decomposition. A good decomposition on final layer without $\mathcal{L}_c$ does not help generalize much. 

\subsection{Importance of Low-Rank}
\label{sec:lowrank}
Table~\ref{tab:speech:k} shows accuracy on various speech tasks with increasing k controlling the rank of the domain-specific component.  Rank-0 corresponds to the baseline ERM without any domain-specific part.  We observe that accuracy drops with increasing rank beyond 1 and the best is with $k=1$ when number of domains $D \le 100$.  As we increase D to 200 domains, a higher rank (4) becomes optimal and the results stay stable for a large range of rank values.  This matches our analytical understanding resulting from Theorem~\ref{thm:main} that we will be able to successfully disentangle only those domain specific components which have been observed in the training domains, and using a higher rank will increase noise in the estimation of $w_c$.


\begin{table}[htb]
    \centering
    \begin{tabular}{|l|r|r|r|}
    \hline
    Rank $k$ & 50 & 100 & 200 \\
    \hline
    0  &   72.6 (.1) & 80.0 (.1) & 86.8 (.3)  \\           
    1 & \textbf{74.1} (.3) &  \textbf{81.4} (.4) & 87.3 (.5) \\
    4 & 73.7 (.1) & 80.6 (.7) &  \textbf{ 87.5} (.1) \\
    9 & 73.0 (.6) & 80.1 (.5) &  \textbf{  87.5} (.2) \\
    24 & 72.3 (.2) & 80.5 (.4) & 87.4 (.3) \\
    \hline
    \end{tabular}
    \caption{Effect of rank constraint (k) on test accuracy for Speech task with varying number of train domains.}
    \label{tab:speech:k}
\end{table}

\section{Conclusion}

We considered a natural multi-domain setting and looked at how standard classifier could overfit on domain signals and delved on efficacy of several other existing solutions to the domain generalization problem. 
Motivated by this simple setting, we developed a new algorithm called CSD that effectively decomposes classifier parameters into a common part and a low-rank domain-specific part.  
We presented a principled analysis to provide identifiability results of CSD while delineating the underlying assumptions.
We analytically studied the effect of rank in trading off domain-specific noise suppression and domain generalization, which in earlier work was largely heuristics-driven. 
 
We empirically evaluated CSD against four existing algorithms on five datasets spanning speech and images and a large range of domains.  We show that CSD generalizes better and is considerably faster than existing algorithms, while being very simple to implement.
In future, we plan to investigate algorithms that combine data augmentation with parameter decomposition to gain even higher accuracy on test domains that are related to training domains.

\bibliography{main}
\bibliographystyle{icml2020}



\end{document}